\pgfplotsset{compat=1.14}
\numberwithin{equation}{section}
\newcommand{\N}{\ensuremath{\mathcal N}}
\newcommand{\Ke}{\ensuremath{\widehat{~\mathcal K}}} 
\newcommand{\z}{\emptyset}
\newcommand{\Iff}{\Longleftrightarrow}
\newcommand{\tiff}{if and only if \ }
\newcommand{\klam}[1]{\ensuremath{\langle #1 \rangle}}
\newcommand{\set}[1]{\ensuremath{\{#1\}}}
\newcommand{\tand}{\text{ and }}
\newcommand{\tor}{\text{ or }}
\newcommand{\card}[1]{\ensuremath{\lvert #1 \rvert}}
\newcommand{\df}{\ensuremath{:=}}
\newcommand{\bel}{\ensuremath{\operatorname{{bel}}}}
\newcommand{\pl}{\ensuremath{\operatorname{{pl}}}}
\newcommand{\ind}{\ensuremath{\operatorname{{ind}}}}
\newcommand{\upp}{\ensuremath{\operatorname{{upp}}}}
\newcommand{\low}{\ensuremath{\operatorname{{low}}}}
\newcommand{\obs}{\ensuremath{\operatorname{{obs}}}}
\newcommand{\cp}{\operatorname{cp}}
\newcommand{\At}{\ensuremath{\operatorname{{At}}}}
\newcommand{\at}{\ensuremath{\operatorname{{noa}}}}
\newcommand{\pp}{\ensuremath{\operatorname{{pp}}}}
\title{Approximation by filter functions}
\author{Ivo D\"{u}ntsch%
\thanks{The ordering of authors is alphabetical and equal authorship is implied.}
\thanks{Ivo D{\"u}ntsch gratefully acknowledges support by Fujiang Normal Univeristy, the Natural Sciences and Engineering Research Council of Canada Discovery Grant 250153, and by the Bulgarian National Fund of Science, contract
DN02/15/19.12.2016.}
\and G{\"u}nther Gediga$^\star$%
\and Hui Wang$^\star$
}
\institute{School of Maths. and Computer Science, Fujian Normal University,  Fuzhou, Fujian, China, and
Brock University, St. Catharines,  ON, L2S 3A1, Canada, \\ \href{mailto:ivo@duentsch.net}{ivo@duentsch.net}
\and
Institut f{\"u}r Evaluation  und Marktanalysen,  Brinkstr.~19, 49143 Jeggen, Germany, \href{mailto:gediga@eval-institut.de}{gediga@eval-institut.de}
\and School of Maths. and Computer Science, Fujian Normal University,  Fuzhou, Fujian, China, and
School of Computing and Mathematics,
University of Ulster, Newtownabbey, N.Ireland, \\ \href{mailto:H.Wang@ulster.ac.uk}{H.Wang@ulster.ac.uk}
}
\begin{document}

\maketitle

\begin{abstract}
\noindent In this exploratory article, we draw attention to the common formal ground among various estimators such as the belief functions of evidence theory and their relatives, approximation quality of rough set theory, and contextual probability. The unifying concept will be a general filter function composed of a basic probability and a weighting which varies according to the problem at hand. To compare the various filter functions we conclude with a simulation study with an example from the area of item response theory.

\keywords{Filter functions. Belief functions. Approximation quality. Contextual probability.}
\end{abstract}

\section{Introduction}\label{sec:intro}

In order to classify a data point $x \in Q$ about which we have no precise knowledge, one may take into  account information that is available in a neighbourhood of $x$ and use this to classify $x$.  Neighbourhoods can be defined in various ways; prominent examples are by distance functions in a numerical context or as equivalence or similarity classes with respect to a chosen relation in a nominal context \cite{paw91}.

The original rough set concept of neighbourhood of a point $x$ is a class of an equivalence relation which contains $x$. This was generalized to consider the relationship of subsets of $Q$ with $R(x)$, where $R$ is a binary relation on $Q$ and $R(x) = \set{y \in Q: xRy}$. From each of these neighbourhood concepts lower and upper approximations can be derived, and we invite the reader to consult \cite{sv95} for an introduction to such generalization.

Even if we have decided in principle which type of neighbourhood of $E \subseteq Q$ should be considered, it is often still not clear which neighbourhood should be used. For example, one crucial issue in the $k$ -- nearest neighbour method is the choice of $k$. In other words, decisions have to be made which sets we allow to be neighbourhoods of a point or a set, and this is where filter functions come in useful.

The Oxford English Dictionary gives various definitions of \emph{filter}, among others, \cite{oed_filter}:

\begin{itemize}
\item A porous device for removing impurities or solid particles from a liquid or gas passed through it.
\item A device for suppressing electrical or sound waves of frequencies not required.
\item  \emph{Computing} A function used to alter the overall appearance of an image in a specific manner.
\item \emph{Computing} A piece of software that processes data before passing it to another application, for example to reformat characters or to remove unwanted types of material.
\end{itemize}
A filter function may be considered as a rule that tells us which sets are selected to serve as an approximation (or description) of a subset $E$ of the universe $Q$ , and how these  ``neighbourhoods'' will be weighted.


Throughout, $Q$ denotes a finite nonempty set with $\card{Q} = n$, and \N\ is a family of subsets of $Q$.

At times, we will suppose that \N\ is a -- not necessarily proper --  Boolean subalgebra of $2^Q$ with atom set $\At(\N) = \set{A_1, \ldots, A_k}$. In this case, if $Y \in \N$, we define $\at(Y)$ as the number of atoms of \N\ contained in $Y$.

A \emph{probability measure} on a Boolean subalgebra $\N$ of $2^Q$ is an additive function $p$ on $\N$, i.e. if $Q_1, \ldots, Q_k \in \N$, and the $Q_i$ are pairwise disjoint, then $p(\bigcup\set{Q_i: 1 \leq i \leq k} = \sum\set{p(Q_i): 1 \leq i \leq k}$; we require furthermore that $p(Q) = 1$. This is the standard definition of measure theory.

The \emph{sampling probability} on \N\ is defined by
\begin{gather}\label{rp}
p_\N(Y) \df
\begin{cases}
\sum\set{\frac{\card{A_i}}{n}: A_i \subseteq Y}, &\text{if } Y \neq \z, \\
\z, &\text{otherwise.}
\end{cases}
\end{gather}
This assignment is based on the principle of indifference and assumes ignorance about the distribution within the atoms of \N.

A generalization of probability measures are \emph{mass functions} or \emph{basic probabilities} \cite{sha76}, or \emph{basic belief functions} \cite{sk_tbm}: A \emph{mass function on \N} is a function $m: \N \to [0,1]$ such that $\sum\set{m(Y): Y \in \N} = 1$. A \emph{focal element} is a set $Y \in \N$ with $m(Y) \neq \z$.  Owing to the finiteness of $Q$, the restriction to the upper bound $1$ for $m(Y)$ is one of convenience which may be obtained by appropriate weighting. Unlike the Dempster--Shafer model, we assume an open world situation, and do not require that $m(\z) = 0$; here, we follow \cite[Section 4.8]{smets88a}.

If $p$ is a probability measure on $\N$, then the function $m_p: \N \to [0,1]$ defined by
\begin{gather}\label{mp}
m_p(Y) \df
\begin{cases}
p(Y), &\text{if } Y \in \At(\N), \\
0, &\text{otherwise.}
\end{cases}
\end{gather}
is a mass function. So, formally, probabilities are special mass functions (often called \emph{Bayesian mass functions}).

\section{Filter functions}

In general, a filter is a function which passes information that is pertinent to the application area,  and reduces (or leaves out) information considered to be irrelevant. This concept of a filter originates with signal processing, but the same idea may be applied to elements of weighted structures. There is no relation to the filter concept in lattice theory.

We consider filter functions $F: 2^Q \to [0,1]$ of the general form
\begin{gather}\label{genF}
F(E) = \sum\set{m(Y) \cdot w(E,Y), Y \in \N}.
\end{gather}
A filter consists of several parts:
\begin{itemize}
\item A set \N\ of neighbourhoods which are often determined by an indicator function and, perhaps, other parameters. In such a way, the pool \N\ of possible neighbourhoods is adjusted to the needs of the problem under consideration. How the initial \N\ is chosen is a topic for further research.
\item A weighting function $w: 2^Q \times \N \to [0,1]$ which re--scales the weights of the neighbourhoods in such a way that desired properties such as the value of an upper bound or the sum of the re--scaled values are guaranteed. In most cases, the values of $w$ will be in $\set{0,1}$.
\end{itemize}


If $E \subseteq Q$ is an event (or a piece of evidence), and $Y \in \N$, it is reasonable to suppose that $Y$ should not be considered a neighbourhood of $E$, if $E \cap Y = \z$. On the other hand, any $Y$ which contains $E$ should be considered a neighbourhood of $E$; these are, in some sense, ``boundary'' situations.

In this spirit, we define our main indicator functions by
\begin{xalignat*}{2}
\ind^u(X, Y) = 1 &\Iff \text{ if } X \cap Y \neq \z, &&\text{Upper indicator} \\
\ind^l(X, Y) = 1 &\Iff Y \subseteq X, &&\text{Lower indicator}. \\
\intertext{Other indicators we use are}
\ind^z(Y) = 1 &\Iff \ind^u(Y, Y) = 1  \Iff Y \neq \z, \\
\ind^{sub}(X,Y) = 1 &\Iff X \subseteq Y, &&\text{Subset indicator}, \\
\ind^{eq}(X,Y) = 1 &\Iff \ind^{sub}(X,Y) \cdot \ind^{sub}(Y,X) = 1 \Iff X = Y &&\text{Equality indicator}.\\
\end{xalignat*}
We suppose, as is customary, that an indicator function takes values in $\set{0,1}$. Now we define the \emph{upper} and the \emph{lower filter}:
\begin{xalignat}{2}
F^u_m(E) &\df \sum \set{m(Y) \cdot \ind^u(E, Y): Y \in \N}, &&\text{Upper filter} \label{upF} \\
F^l_m(E) &\df \sum\set{m(Y) \cdot \ind^l(E, Y): Y \in \N}, &&\text{Lower filter}. \label{lowF}
\end{xalignat}

Lower and upper filters as defined above are not the only one, which select a neighbourhood of some evidence  $E$;  they are, as we shall see, maximal filters of their type: For the upper filter and $E \neq \z$, a set $Y \in \N$ is a neighbourhood of $E$, if they have at least one element in common. A simple way to sharpen this is the demand that they have at least $k \geq 1$ elements in common. If $E$ has exactly one element, then the situation is unchanged, but if $E$ consists of more than one element, the number of neighbourhood sets will be reduced. These considerations lead us to \emph{upper} and \emph{lower $k$ -- filters} $(1\leq k\leq |Q|)$ by first defining the indicators
\begin{align}
\ind^{u,k}(X,Y) = 1 &\Iff   \card{X \cap Y} \geq k , \label{ukI}\\
\ind^{l,k}(X,Y) = 1 &\Iff   Y \subseteq X  \tand \card{Y} \geq k. \label{lkI}
\end{align}
A similar parametrization may be used to demand that a neighbourhood should cover more than $s$\% of the event. So, we define the indicator functions
\begin{align}
\ind^{u,s}(X,Y) = 1 &\Iff X = Y \tor \card{X \cap Y} \gneq s \cdot \card{X}, \label{usI}\\
\ind^{l,s}(X,Y) = 1 &\Iff X = Y \tor Y \subsetneq X  \tand \card{Y} \gneq s \cdot \card{X}. \label{lsI}
\end{align}
The boundary values of the parameterized indicators are easily seen to be
{\small
\begin{align*}
\ind^{u,k=1}(X,Y) = \ind^{u,s=0}(X,Y) &= \ind^u(X,Y), & \ind^{u,k=|Q|}(X,Y) = \ind^{u,s=1}(X,Y) &= \ind^{sub}(X,Y)  \\
\ind^{l,k=1}(X,Y) = \ind^{l,s=0}(X,Y) &= \ind^{l}(X,Y), & \ind^{l,k=|Q|}(X,Y) = \ind^{l,s=1}(X,Y) &= \ind^{eq}(X,Y).
\end{align*}
}
The respectively weighted upper and lower filter are now defined by
\begin{xalignat}{2}
F^{u,s}_m(E) &\df \sum_{Y \in \N} m(Y) \cdot \ind^{u,k}(E, Y), \label{upkF} \\
F^{l,s}_m(E) &\df \sum_{Y \in \N} m(Y) \cdot \ind^{l,k}(E, Y), \label{lowkF} \\
F^{u,s}_m(E) &\df \sum_{Y \in \N} m(Y) \cdot \ind^{u,s}(E, Y), \label{upsF} \\
F^{l,s}_m(E) &\df \sum_{Y \in \N} m(Y) \cdot \ind^{l,s}(E, Y). \label{lowsF}
\end{xalignat}
The parameterized filters are antitone with respect to $s$:

\begin{theorem}
Let $s,t \in [0,1]$, and $s \leq t$. Then, $F^{l,t}_m(E) \leq F^{l,s}_m(E)$ and $F^{u,t}_m(E) \leq F^{u,s}_m(E)$.
\end{theorem}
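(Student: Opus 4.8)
The plan is to show that the indicator functions themselves are antitone in the threshold parameter $s$, and then to push this pointwise inequality through the (nonnegative) linear combination that defines the filters. The whole argument reduces to a statement about each summand $m(Y) \cdot \ind^{\bullet,s}(E,Y)$ separately.

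First I would fix $E \subseteq Q$ and an arbitrary $Y \in \N$, and compare $\ind^{l,t}(E,Y)$ with $\ind^{l,s}(E,Y)$ for $s \leq t$. By \eqref{lsI}, if $\ind^{l,t}(E,Y) = 1$ then either $E = Y$, in which case $\ind^{l,s}(E,Y) = 1$ as well for trivial reasons, or $Y \subsetneq E$ and $\card{Y} \gneq t \cdot \card{E}$. In the latter case $\card{Y} \gneq t \cdot \card{E} \geq s \cdot \card{E}$ since $\card{E} \geq 0$ and $s \leq t$, so again $\ind^{l,s}(E,Y) = 1$. Hence $\ind^{l,t}(E,Y) \leq \ind^{l,s}(E,Y)$ for every $Y$. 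The identical argument with \eqref{usI} in place of \eqref{lsI} — replacing $Y \subsetneq E$ by $\card{E \cap Y} \gneq t \cdot \card{E}$ and dropping the proper-subset clause — gives $\ind^{u,t}(E,Y) \leq \ind^{u,s}(E,Y)$ for every $Y$.

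Second I would multiply each of these inequalities by $m(Y) \geq 0$, which preserves the direction of the inequality since $m$ takes values in $[0,1]$, and then sum over all $Y \in \N$. By the definitions \eqref{lowsF} and \eqref{upsF} this yields exactly $F^{l,t}_m(E) \leq F^{l,s}_m(E)$ and $F^{u,t}_m(E) \leq F^{u,s}_m(E)$, as claimed.

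The only delicate point is the handling of the disjunctive clause ``$X = Y$ or $\dots$'' in the definitions \eqref{usI}--\eqref{lsI}: one must check that whenever the first disjunct fails to carry over monotonically one is already in the first disjunct, which is why the case split ``$E = Y$ versus $E \neq Y$'' is made explicit above. Everything else is routine, and no properties of $\N$ beyond $m(Y) \geq 0$ are needed — in particular $\N$ need not be a Boolean subalgebra here.
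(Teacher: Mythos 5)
Your proposal is correct and follows essentially the same route as the paper's proof: establish the pointwise inequality $\ind^{l,t}(E,Y) \leq \ind^{l,s}(E,Y)$ (and its upper analogue) from $s \leq t$, then multiply by $m(Y) \geq 0$ and sum over $Y \in \N$. Your explicit handling of the $E = Y$ disjunct in \eqref{usI}--\eqref{lsI} is a small point of added care that the paper's proof passes over silently, but it does not change the argument.
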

\begin{proof} We show the claim only for the lower filter, as the remaining claim is proved similarly. First, consider
\begin{align*}
F^{l,t}_m(E) \leq F^{l,s}_m(E) &\Iff F^{l,s}_m(E) - F^{l,t}_m(E) \geq 0, \\
&\Iff \sum_{Y \in \N} m(Y) \cdot \ind^{l,t}(E, Y) - \sum_{Y \in \N} m(Y) \cdot \ind^{l,s}(E, Y) \geq 0, \\
&\Iff \sum_{Y \in \N} m(Y) \cdot ( \ind^{l,t}(E, Y) -  \ind^{l,s}(E, Y)) \geq 0.
\end{align*}
Since $s \leq t$, we have $\card{Y} \gneq t \cdot \card{X}$ implies $\card{Y} \gneq s \cdot \card{X}$, and therefore, $\ind^{l,t}(E,Y) = 1$ implies $\ind^{l.s}(E,Y) = 1$. It follows that $\ind^{l,s}(E,Y) \geq \ind^{l,t}(E,Y)$, i.e. $\ind^{l,s}(E,Y) - \ind^{l,t}(E,Y) \geq 0$. Since $m(Y) \geq 0$, we conclude $F^{l,t}_m(E) \leq F^{l,s}_m(E)$.
\end{proof}
The same proof shows that the parameterized filters are antitone as well.

\section{Approximation and estimation}\label{sec:est}

In this section we show how commonly used belief and approximation measures fit into the scheme of filter functions as proposed in \eqref{genF}. For an overview of different interpretations of ``belief'' we refer the reader to \cite{hf92}.

\subsection{Evidence measures}

Evidence theory has been widely studied as an alternative to classical probability theory, see the source book  edited by Yager \& Liu \cite{yl08}. For a thoughtful discussion of belief and probability we invite the reader to consult \cite{fh91} and \cite{hf92}, where, among others, it was shown  that ``a key part of the important Dempster-Shafer theory of evidence is firmly rooted in classical probability theory''.

In evidence theory and related fields, two functions are obtained from a mass function $m: \N \to [0,1]$:
\begin{xalignat}{2}
\bel_m(E) &\df \sum_{Y \in \N, Y \subseteq E} m(Y), &&\text{degree of belief}, \label{def:bel} \\
\pl_m(E) &\df \sum_{Y \in \N, Y \cap E \neq \z} m(Y), && \text{degree of plausibility.} \label{def:pl}
\end{xalignat}
These concepts were introduced by Dempster \cite{dempster1967}, who called them, respectively, \emph{lower} and \emph{upper probability}. A belief function assigns the total amount of belief supporting $E$ without supporting $Q \setminus E$, and $\pl_m(E)$ quantifies the maximal amount of belief that might support $E$ \cite{smets88}. It is straightforward to show that $\pl_m(E) = \bel_m(Q) - \bel_m(Q \setminus E)$.

Conversely, every mass function can be obtained from a function $\bel$ which satisfies certain conditions, see e.g. \cite[Chapter 2]{sha76}.

Belief and plausibility are easily related to the upper and lower filter function as follows:
\begin{align*}
\bel_m(E) &= \sum\set{m(Y): Y \subseteq E, Y \in \N} = \sum\set{m(Y) \cdot \ind^l(E, Y): Y \in \N} = F^l_m(E), \\
\pl_m(E) &= \sum\set{m(Y): E \cap Y \neq \z, Y \in \N} = \sum\set{m(Y) \cdot \ind^u(E, Y): Y \in \N} = F^u_m(E).
\end{align*}

\subsection{Rough set approximation quality}

Suppose that $X \subseteq Q$, and that $\N$ is a Boolean algebra with atoms $A_1, \ldots, A_k$. Then, $\At(\N)$ can be considered the partition of $Q$ obtained from some equivalence relation $\theta$ on $Q$; in other words, we work with a rough set approximation space $\klam{Q,\theta}$.  In rough set theory \cite{paw91}, the \emph{upper approximation of $X$} is the set $\upp(X)\df \bigcup\set{A_i: A_i \cap X \neq \z}$ and the \emph{lower approximation of $X$} is the set $\low(X) \df \bigcup\set{A_i: A_i \subseteq X}$. These approximations lead to two statistics relative to \N:
\begin{align}
\mu^{\N *}(E) &= \frac{\card{\upp(E)}}{n}, \\
\mu^\N_*(E) &= \frac{\card{\low(E)}}{n}.
\end{align}

Inspection of the indices used in ``classical rough set theory''  such as $\alpha, \gamma$, rough membership, other element counting etc. shows that these indices are valid only in case we assume the principle of indifference: Assuming no knowledge of the distribution within the equivalence classes, we let $p$ be the sampling probability measure on \N\ as defined in \eqref{rp}. There may be other assumptions within the frame of lower and upper set approximations, which consequently lead to other evaluation schemes. The principle of indifference is widely used in rough set theory -- explicitly or implicitly. For example, the general rough membership function defined in \cite[Definition 4.3.]{mani17} is a special filter in our terminology for which the principle of indifference is a hidden assumption; otherwise the estimator of this index is biased and unsuitable for applications. In \cite{mani17} only point estimators of indices or membership functions are addressed - but this is not the whole story: The reliability of the indices needs to be discussed as well. Assuming the principle of indifference, we are able to compute confidence intervals such as the reliability  of the general rough membership function or other filters, as we demonstrate in the present work.

Using the mass function $m$ determined by $p$ as defined in \eqref{mp} we can describe $\mu^{\N *}(E)$ and $\mu^\N_*(E)$ in terms of upper and lower filter:
\begin{align*}
\mu^{\N *}(E) &= \sum\set{\frac{\card{A_i}}{n}: E \cap A_i \neq \z}, \\
&= \sum\set{m(Y): E \cap Y \neq \z,  Y \in \N}, \\
&= \sum\set{m(Y) \cdot \ind^u(E \cap Y), Y \in \N}, \\
& = F^u_m(E), \\
\mu^\N_*(E) &= \sum\set{\frac{\card{A_i}}{n}: A_i \subseteq E)}, \\
&=  \sum\set{m(Y)\cdot \ind^l(E \cap Y), Y \in \N}, \\
& = F^l_m(E).
\end{align*}
This shows the close connection of rough set approximation to the estimators of evidence theory, observed first by Skowron \cite{sko90}.

 The \emph{approximation quality} is the function
\begin{align}
\gamma(E) &\df \frac{\card{\low(E)}}{n} + \frac{\card{\low(Q \setminus E)}}{n}. \label{gamma}.
\end{align}
$\gamma(E)$ is the relative frequency of all elements of $Q$ which are correctly
classified under the granulation of information by $\N$  with respect to being an element
of $E$ or not. In terms of filter functions, this becomes
\begin{align}
\gamma(E) &= F^l_m(E) + F^l_m(Q \setminus E).
\end{align}
%

\subsection{Pignistic probability}

According to Smets \cite{smets88}, decision making under uncertainty can (and should) be done in two steps. On a \emph{credal level}, an assignment of beliefs is made to pieces of evidence. In order to be coherent on a \emph{pignistic level} (decision level), the uncertainties quantified by the belief function must be turned into a probability measure. In such a way, the two levels of handling uncertainty and decision making are clearly separated unlike, as Smets claims, in Bayesian reasoning.

 A \emph{pignistic probability distribution} (with respect to the mass function $m$ and the Boolean algebra \N) \cite[Section 3]{sk_tbm} is a function $\pp: \N \to [0,1]$ which is defined by
\begin{gather}\label{def:pig}
\pp_m(E) \df \sum\set{ m(Y) \cdot \frac{\card{E \cap Y}}{\at(Y)}: Y \in \N^+}
\end{gather}
If $E$ is an atom of \N, we obtain
\begin{gather}
\pp_m(E) = \sum\set{ m(Y) \cdot \frac{\card{E}}{\at(Y)}: E \subseteq Y \in \N}.
\end{gather}
Note that $E \subseteq Y$ implies that $Y \neq \z$. It was shown in \cite{smets88} that $\pp$ is indeed a probability measure, if $\N = 2^Q$. Setting
\begin{gather*}
w(E,Y) \df
\begin{cases}
\frac{\card{E \cap Y}}{\at(Y)} &\text{if } Y \neq \z, \\
0, &\text{otherwise},
\end{cases}
\end{gather*}
we see that $\pp(E) = \sum\set{m(Y) \cdot w(E,Y), Y \in \N}$ as in \eqref{genF}.

\subsection{Contextual probability}

Another two step procedure to reason under uncertainty, called \emph{contextual probability}  was first proposed in \cite{hui03}, and subsequently developed in \cite{wdgg_noknn}. It is a secondary probability, which is defined in terms of a basic (primary) function; it can be used to estimate the primary probability from a data sample through a process called {\em neighbourhood counting}; for details see \cite{wm08}.

Given a mass function $m$ over $2^Q$, we first define a weight function by
\begin{gather*}
w(E,Y) \df
\begin{cases}
\frac{\card{E \cap Y}}{\card{Y}} &\text{if } Y \neq \z, \\
0, &\text{otherwise}.
\end{cases}
\end{gather*}
The \emph{contextual probability} is the function $\cp^m: 2^{Q}\rightarrow [0,1]$ defined by
\begin{gather}\label{def:cpm}
\cp^m(E))=\sum\set{ m(Y)  \cdot w(E,Y):  Y \in \N},
\end{gather}
Wang \cite{hui03} showed that $\cp^m$ is a probability distribution if $\N = 2^Q$.

This definition of contextual probability was found problematic when trying to find a simple relationship between the primary probability and the secondary probability, so the definition was refined in \cite{wd05}, and extended in \cite{wm08}. The work on estimating contextual probability from data sample has spawned a series of papers exploring the various forms of neighbourhood counting for multivariate data, sequences, trees, and graphs. We give a somewhat simplified version of the revised definition, and also extend its range over $2^Q$.

Suppose that $p$ is a probability measure on \N, and let  $K \df \sum\set{p(Y) \cdot \card{Y}: Y \in \N}$ be a normalization factor. The \emph{contextual probability with respect to $p$},  is defined by
\begin{gather}\label{def:cpp}
\cp^p(E) \df \sum\set{p(Y) \cdot \frac{ \card{E \cap Y}}{K}, Y \in \N}.
\end{gather}
Setting $w(E,Y) \df \frac{ \card{E \cap Y}}{K}$ and using the mass function $m_p$ of \eqref{mp}, we see that $\cp^p$ is an instance of a general filter function.

\section{Probabilistic knowledge structures}\label{sec:pks}

In this section we apply some of the filter functions defined previously to a situation well known in the context of psychometric aspects of learning, in particular, knowledge structures \cite{fal90,fd11}. Connections of knowledge structures to other concepts including rough sets were exhibited in \cite{dg_ksencon}.

Suppose that $U$ is a set of students, $Q$ is a set of problems, and $S \subseteq U \times Q$ is a binary relation between students and problems, called a \emph{solving relation}; $uSq$ means that student $u$ solves problem $q$. For each $u \in U$, the set $S(u) \df \set{q \in Q: uSq}$ is called the \emph{empirical (observed) solving pattern of $u$}. The set $\set{S(u): u \in U}$ is called an \emph{empirical knowledge structure} (EKS) with respect to $U$ and $Q$, denoted by $\Ke$. With each $X \subseteq Q$ we associate a number $\obs(X) = \card{\set{u \in U: S(u) = X}}$. Thus, $\obs(X)$ is the number of times that $X$ was observed as a student's solving pattern.

A \emph{probabilistic knowledge structure} (PKS) is a tuple $\klam{\N,m}$ where $\N \subseteq 2^Q$, and $m$ is a mass function on \N.  We interpret $m$ as \emph{item--pattern probability} in the sense that
\begin{gather}
m(X) = p(\text{each $x \in X$ is solved, and no problem in $Q \setminus X$ is solved}).
\end{gather}
in other words $m(X)$ is the probability that $X$ is an observed item pattern. $m(\z)$ is the probability that no item in $Q$ is solved, and $m(\set{x})$ is the probability that only $x$ is solved.

Given a PKS, we estimate the probabilities by the relative frequencies of the observed item patterns  by
\begin{gather}
\hat{m}(X) = \hat{p}(\text{each $x \in X$ is solved, and no problem in $Q \setminus X$ is solved}) = \frac{\obs(X) \cdot \card{X}}{n}.
\end{gather}
In this way we not only obtain insight into the probability nature of the mass function and its derivations, but we may use the empirical counterpart of relative frequencies as estimates and as a basis for statistical inference.

Using a PKS as a workhorse, we will explore which interpretation this context offers for different filter functions.
First, consider $F^{l}_m$, which is just the belief function $\bel_m$. Then, according to our interpretation,
\begin{align*}
\bel_m(E) =& \sum\set{m(Y): Y \in \N, Y \subseteq E}, \\
=& p_{\bel_m}((\text{some items in $E$ are solved or no item is solved)}\\
& \text{and no item outside $E$ is solved.})
\end{align*}
Considering a solving path $\z \subseteq \set{x_1} \subseteq \set{x_1, x_2} \subseteq \ldots \subseteq E$, we see that $p_{\bel_m}$ is a cumulative probability function with $p_{\bel_m}(Q) = 1$. A problem which may arise is that the condition ``some item in $E$ is solved or no item is solved'' is not always acceptable. Thus, we may remove the latter condition -- which corresponds to $m(\z) \neq \z$, and define
\begin{align*}
\bel_m^+(E) &= \sum\set{m(Y): Y \in \N^+, Y \subseteq E}, \\
&=  p_{\bel_m^+}(\text{some items in $E$ are solved and no item outside $E$ is solved.})
\end{align*}
$\bel^+_m$ is also a cumulative function, but $\bel^+_m(Q) = 1 - m(\z)$.

Turning to $F^u_m$, we recall that $F^U_m = \pl_m$. Then,
\begin{align*}
\pl_m(E) &= \sum\set{m(Y): Y\cap E\neq\emptyset, Y \in \N}, \\
&= p_{\pl}(\text{at least one problem in $E$ is solved}).
\end{align*}
If $E = \set{x}$, then $p_{\pl}(\set{x})$ is the \emph{item solving probability} of $x$.

To estimate only the states in \N, we let $\ind_{\N}(E) \df 1$ \tiff $E \in \N$, and define
\begin{gather}\label{lowminF}
\bel^{\min}_m(E) \df \ind_\N(E) \cdot F^l_m(E) = \sum \set{m(Y) \cdot \ind_\N(E) \cdot \ind^l(E, Y): Y \in \N}.
\end{gather}
$F^{l, \min}_m$ may be regarded as some sort of minimal lower filter, as only elements of \N\ are allowed to be approximated. Observe that the lower filter $F^l_m$ coincides with $\bel^{\min}_m$ \tiff $\N = 2^Q$.

To parameterize the upper filter $F^u_m(E)$ to use only states in \N\ that contain $E$ we shall consider  $\pl^{\min}_m \df F^{u,1}_m$ as defined in \eqref{upsF} with $s = 1$.

Suppose we have a set of five questions $Q = \set{1,2,3,4,5}$ and \N\ consisting of 12 item patterns, each supplied with a basic probability, as shown in Figure \ref{fig:ks1}.

\begin{figure}[h!tb]
\caption{A weighted knowledge structure}\label{fig:ks1}
\vspace{5mm}
\centering
\includegraphics[width=.9\textwidth, clip]{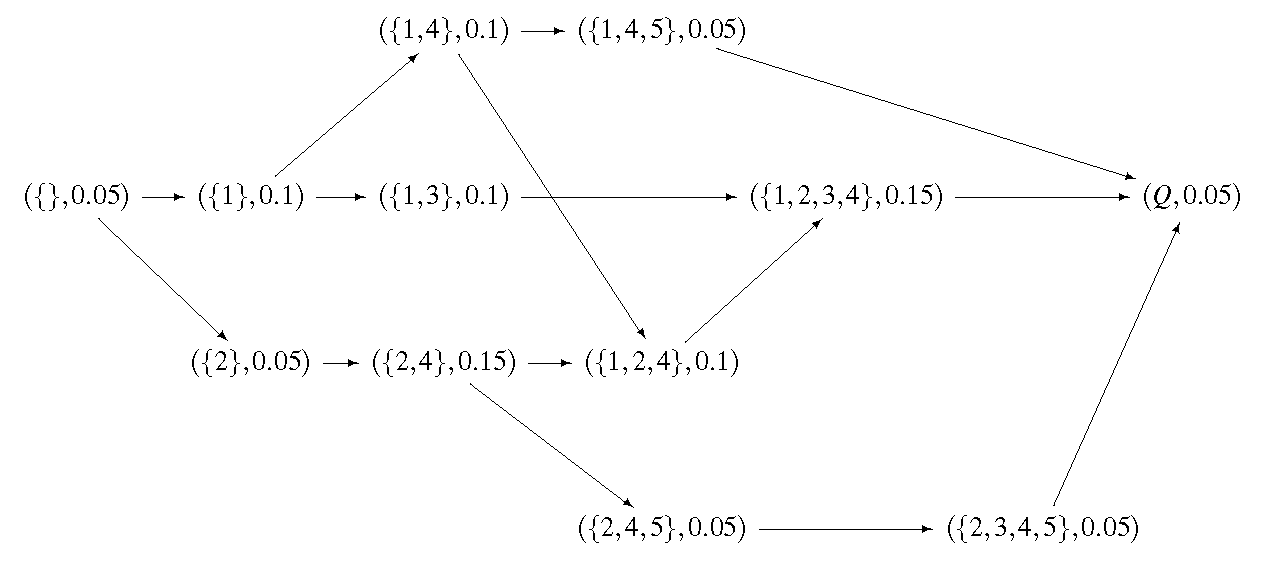}
\end{figure}

Given the PKS in Figure \ref{fig:ks1}, we have performed some empirical experiments to compute sampling distributions of the defined filter procedures. We use a multinomial sampling,  and $N=50, N = 100, \tor N = 1,000$ observations of item patterns. For 10,000 simulations of the sampling process, we computed the sampling distributions of the functions $\bel_m, \pl_m, \cp^m, \bel^{\min}_m, \pl^{\min}_m, \tand pl[k=2]$ for all subsets of $2^{\set{1,2,3,4,5}}$. We have computed the mean, bias, median, upper and lower quartile, and the 2.5\%- and 97.5\%-quantile of the sampling distributions of these functions for each subset $Q$.
\footnote{The  tables and the R-source of the simulation procedure are available for download at \href{www.roughsets.net}{www.roughsets.net}.
}.

\begin{figure}[h!tb]
\caption{Simulation graph}\label{fig:ks2}
\centering
\includegraphics[width=.6\textwidth, clip]{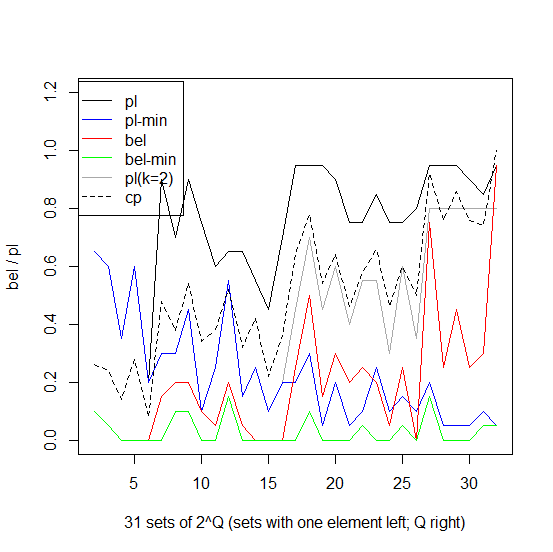}
\end{figure}

\begin{figure}[h!tb]
\newcommand{\tkw}{0.98}
\newcommand{\tkh}{0.6}
\caption{CI and median of $\cp^m$}\label{fig:ks3}
\begin{minipage}[t]{0.48\textwidth}
\vspace{0mm}
\centering
  \begin{tikzpicture}
   \begin{axis}[ymin=0, ymax=1, xmin=0, xmax=33, width=\tkw\textwidth,height=\tkh\textheight,
			title={CI 95\%, $n = 50$)}		
			]
            \addplot table[x=Varname, y=lower]  {datacp50.csv};
			\addplot table[x=Varname, y=median] {datacp50.csv};
			\addplot table[x=Varname, y=upper]  {datacp50.csv};			
	\end{axis}
\end{tikzpicture}
\end{minipage}
\begin{minipage}[t]{0.48\textwidth}
\vspace{0mm}
\centering
\begin{tikzpicture}
   \begin{axis}[ymin=0, ymax=1, xmin=0, xmax=33, width=\tkw\textwidth,height=\tkh\textheight,
			title={CI 95\%, $n = 500$)}			
			]
            \addplot table[x=Varname, y=lower]  {datacp500.csv};
			\addplot table[x=Varname, y=median] {datacp500.csv};
			\addplot table[x=Varname, y=upper]  {datacp500.csv};			
	\end{axis}
\end{tikzpicture}
\end{minipage}
\end{figure}
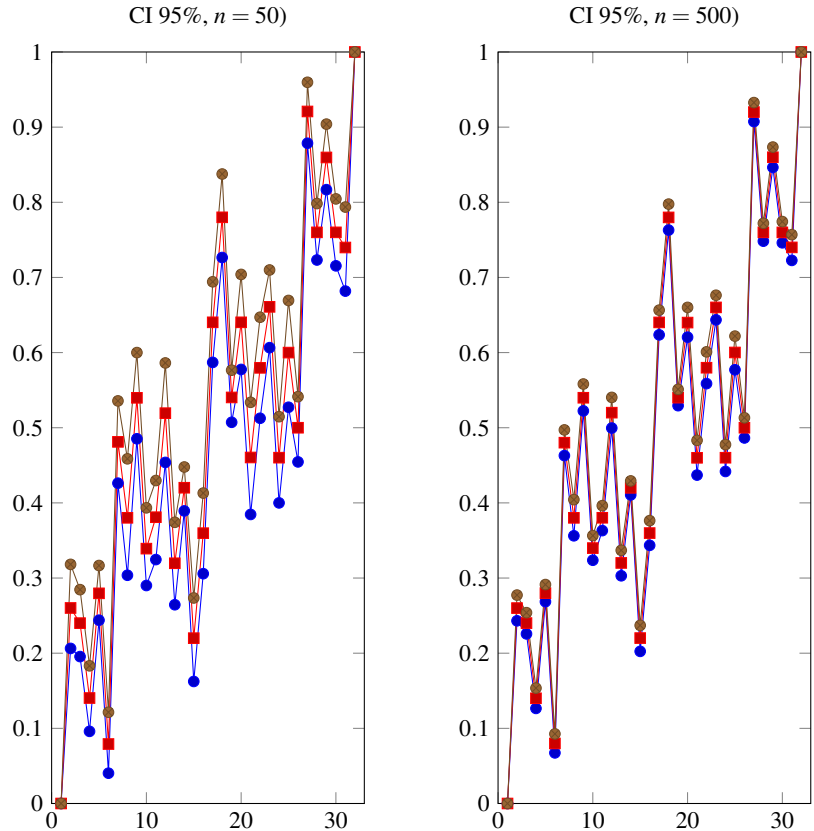

Figure \ref{fig:ks2} shows the mean of the different filter functions on the nonempty subset of $2^Q$.
The left most is the value of $\set{1}$, followed by the values of the sets $\set{2}$, \ldots, $\set{5}$. The sets with two elements follow in lexicographical order, followed by the sets with $3$, $4$, and finally, $5$ elements.

We observe that the values of the functions $\pl_m, pl^{\min}_m$, and $\pl_m^{k=2}$ are equal for sets with one element, and  $\pl^{\min}_m$ and $\pl^{k=2}$ are identical for sets with two elements. The larger the number of elements, the larger the difference of $\pl_m$ and $\pl^{\min}_m$. The same observations hold for $\bel_m$ and $bel^{\min}_m$. Furthermore, the graphs of  $\pl^{k=2}_m$ and $\cp^m$ are quite similar -- up to events with 1 element.

By way of example, Figure \ref{fig:ks3} shows the confidence intervals of $\cp^m$ for 50, respectively, 500 observations.
The organisation of the $x$--axis in Figure \ref{fig:ks3} is the same as in Figure \ref{fig:ks2}. It can be see from Figure \ref{fig:ks3} that -- given a quite sparse PKS as our example of Figure \ref{fig:ks1} -- the 95\% confidence bounds are quite narrow, even if we assume a small empirical basis of only 50 observations (left part of the figure).  An empirical basis of 500 item patterns allows us a precise estimate of the $\cp^m$ values. The same is true for the other measures; we omit the details for these which can be found in the archive.

\section{Summary and outlook}

We have exhibited a common form of several estimators employed in reasoning under uncertainty. The novelty is not that connections exist among them -- these have been known for some time --, but the interpretation as filter functions, a term we have borrowed from digital imaging. A filter, such as an edge detector, extracts salient features of a scene, or, as in our case, of a situation for further processing.  A simulation study indicates how some filters behave in various situations.

In future work we shall explore whether and how the filter concept can be extended to other estimators, for example, to kernel functions such as k -- nearest neighbour. We will also investigate a logical approach to filter functions applied in applications of theories of visual perception and digital imaging, following the path started in \cite{dg_things}.

\section*{Acknowledgement}

We are grateful to the referees for constructive comments.


\end{document}